\newcommand{\mb}{\boldsymbol}
\renewcommand{\bm}{\boldsymbol}
\newcommand{\mc}{\mathcal}
\newcommand{\norm}[2]{\left\| #1 \right\|_{#2}}
\newcommand{\reals}{\mathbb{R}}
\newcommand{\<}{\langle}
\renewcommand{\>}{\rangle}
\newcommand{\innerprod}[2]{\left\< #1, #2 \right\>}
\newcommand{\set}[1]{\left\{ #1 \right\}}
\newcommand{\rank}{\mathrm{rank}}
\newcommand{\diag}[1]{\mathrm{diag}\left( #1 \right)}
\newcolumntype{L}[1]{>{\raggedright\let\newline\\\arraybackslash\hspace{0pt}}m{#1}}
\newcolumntype{C}[1]{>{\centering\let\newline\\\arraybackslash\hspace{0pt}}m{#1}}
\newcolumntype{R}[1]{>{\raggedleft\let\newline\\\arraybackslash\hspace{0pt}}m{#1}}
\begin{document}
\mainmatter              
\title{Revisiting Skip-Gram Negative Sampling Model \\With Rectification}
\titlerunning{Revisiting SGNS Model}  
%
\author{Cun (Matthew) Mu, Guang Yang, Yan (John) Zheng}

\authorrunning{C. Mu, G. Yang, and Y. Zheng} 
%
%
\institute{Jet.com/WalmartLabs, Hoboken, NJ 07030, USA\\
\email{\{matthew.mu, guang, john\}@jet.com}}

\maketitle              

\begin{abstract}
We revisit skip-gram negative sampling (SGNS), one of the most popular neural-network based approaches to learning distributed word representation. We first point out the ambiguity issue undermining the SGNS model, in the sense that the word vectors can be entirely distorted without changing the objective value. To resolve the issue, we investigate intrinsic structures in solution that a good word embedding model should deliver. Motivated by this, we rectify the SGNS model with quadratic regularization, and show that this simple modification suffices to structure the solution in the desired manner. A theoretical justification is presented, which provides novel insights into quadratic regularization . Preliminary experiments are also conducted on Google's analytical reasoning task to support the modified SGNS model.
\keywords{word embedding, SGNS model, quadratic regularization}
\end{abstract}
\section{Introduction}
Distributed word representations, a.k.a.  word embeddings, represent each word with a real-valued vector as an approximation to its linguistic meaning. Different from the traditional discrete and sparse one-hot encoding, such continuous and dense representations are shown to  better capture syntactic and semantic regularities in language, and have been successfully applied in various natural language processing tasks, such as document classification \cite{kim2014convolutional}, information retrieval \cite{grbovic2015context}\cite{nalisnick2016improving}, question answering \cite{iyyer2014neural}\cite{shih2016look}, named entity recognition \cite{sienvcnik2015adapting}\cite{lample2016neural}, and parsing \cite{socher2013parsing}.

One of the main approaches to learning distributed word representation is the neural-network based one (\cite{bengio2003neural}\cite{morin2005hierarchical}\cite{bengio2006neural}\cite{collobert2008unified}\cite{mnih2009scalable}\cite{collobert2011natural}\cite{le2011structured}\cite{baroni2014don}\cite{pennington2014glove}), in which word vectors are trained to maximize the likelihood of word-context occurrences observed from large text corpus (e.g., news collections, Wikipedia and Web Crawl) based on probabilistic models. In particular, a series of recent papers by Mikolov et al. \cite{mikolov2010recurrent}\cite{mikolov2013linguistic}\cite{mikolov2013efficient}\cite{mikolov2013distributed}\cite{mikolov2017advances} culminated in and popularized the skip-gram model with negative-sampling training scheme (a.k.a. the SGNS model), which together with its variants \cite{sun2016sparse}\cite{yang2017simple} is shown to achieve state-of-the-art results on a variety of linguistic tasks.


Despite the empirical success of the SGNS model,  in this paper, we will first point out an observation that the optimization problem introduced by the SGNS model is essentially an {\em ill-posed} one. In specific, we can easily distort the output solution without changing its objective value. To fix this issue, we investigate solution structures that a good word embedding model should deliver, and argue that a meaningful word embedding model should allow and only allow the ambiguities introduced by {\em orthogonal transformations}. Motivated by this goal, we rectify the SGNS model by appending quadratic regularization terms to the original objective of SGNS, and show this simple modification {\em suffices} in enforcing the solution to be structured in the desired manner. A theoretical justification is presented, which provides novel insights into quadratic regularization. Preliminary experiments are conducted to evaluate word vectors on Google's analytical reasoning task, which shows the modified SGNS model outperforms the original SGNS model in a consistent manner. 


\section{SGNS Model} \label{sec: review}
The SGNS model is essentially the skip-gram word neural embedding model introduced in \cite{mikolov2013efficient} trained using the negative-sampling procedure proposed in \cite{mikolov2013distributed}. In this section, we will briefly review the SGNS model together with its related notation. Although the SGNS model is initially proposed and described in the the language of neural network, we find the explanation provided by Goldberg and Levy \cite{goldberg2014word2vec} is more transparent and could better disclose the rationale behind the model. Therefore, in the following, we adopt their approach in formulating the SGNS model.

Let $\mc W$ be the word vocabulary of our interest with $n:= |\mc W|$. The training data $\mc D$, normally collected based on some text corpus, consists of word-context pairs $(w,c)\in \mc W \times \mc W$ in both positive and negative sense. For a word $w$, its positive context word $c$ is often sampled from the neighborhood centering around the locations where $w$ shows up in the text corpus, while its negative context word $c$ is normally sampled from $\mc W$ randomly according to certain predefined distribution \cite{levy2015improving}. For each word $w \in \mc W$, its \textit{center-word embedding} and 
\textit{context-word embedding} are assumed to exist and represented as $\mc U[w]$ and  $\mc V[w]$, where 
\begin{flalign}
\mc U: \mc W \to \reals^d \quad \mbox{and} \quad \mc V: \mc W \to \reals^d.
\end{flalign}
The center-word embedding $\mc U[\cdot]$ is normally outputted as word representation, which will be used either by itself or as an important ingredient in subsequent natural language processing and machine learning applications.

The SGNS model learns the embeddings by solving the following optimization problem,
\begin{flalign}\label{eqn: SGNS}
\max_{\mc U: \mc W \to \reals^d, \;\mc V: \mc W \to \reals^d} &\quad \sum_{(w,c)\in \mc D^+} \log \sigma(\mc U[w]^\top \mc V[c])  +  \sum_{(w,c)\in \mc D^-}  \log \sigma(-\mc U[w]^\top \mc V[c]),
\end{flalign}
where $\mc D^+$ and $\mc D^-$ denotes the positive and negative pairs in $\mc D$, and 
$\sigma(\cdot)$ denotes the usual sigmoid function, i.e. $\sigma(x) = 1/(1+\exp(-x))$. For simplicity, we denote the center-word embedding matrix $\mb U$ (resp. context-word embedding matrix $\mb V$) as the matrix in $\reals^{n \times d}$ whose row vectors are stacked by the center-word embeddings (resp. context-word embedding) of all words from the vocabulary. We will use $\bm u_i$, $\bm v_i \in \reals^d$ to denote the $i$-th row of $\mb U$ and $\mb V$. With a slight abuse of notation, we will also use interchangeably  $\mb U[w]$ and $\mc U[w]$,  $\mb V[w]$ and $\mc V[w]$, i.e.
\begin{flalign}
\mb U[w]:= \mc U[w] \quad \mbox{and} \quad \mb V[w]:=\mc V[w]
\end{flalign}
to represent the center-word and the context-word embeddings of the word $w \in \mc W$. Then clearly we can rewrite \eqref{eqn: SGNS} equivalently as a maximization problem over the matrices $\mb U$ and $\mb V$ in $\reals^{n \times d}$,
\begin{flalign}\label{eqn: SGNS-matrix}
\max_{\mb U, \mb V \in \reals^{n \times d}} &\quad \mc L(\mb U, \mb V) := \sum_{(w,c)\in \mc D^+}  \log \sigma(\mb U[w]^\top \mb V[c]) + \sum_{(w,c)\in \mc D^-} \log \sigma(-\mb U[w]^\top \mb V[c]).
\end{flalign}

The SGNS model models how words are interacted with their contexts, which is rooted deeply in the distributional hypothesis of Harris \cite{harris1954distributional}, stating that {\em words sharing similar contexts possess similar meanings}. Intuitively, the SGNS model attempts to find embeddings $\set{\mb U[w]}_{w \in \mc W}$ and $\set{\mb V[c]}_{c \in \mc W}$ in a way such that their inner-products are encouraged to be large  for good context pairs, but to be small for bad ones. Several insightful interpretations--e.g., implicit matrix factorization \cite{levy2014neural}, representation learning \cite{li2015word}, weighted logistic PCA \cite{landgraf2017word2vec}, to just name a few--have been further proposed to better understand the underlying principles of the model. However, as we will point out in the next section, the SGNS model is essentially an ill-posed problem from the perspective of optimization.

\section{Ambiguity in the SGNS Model}

\begin{figure*}[t]
	\begin{center}
		\includegraphics[width= \textwidth]{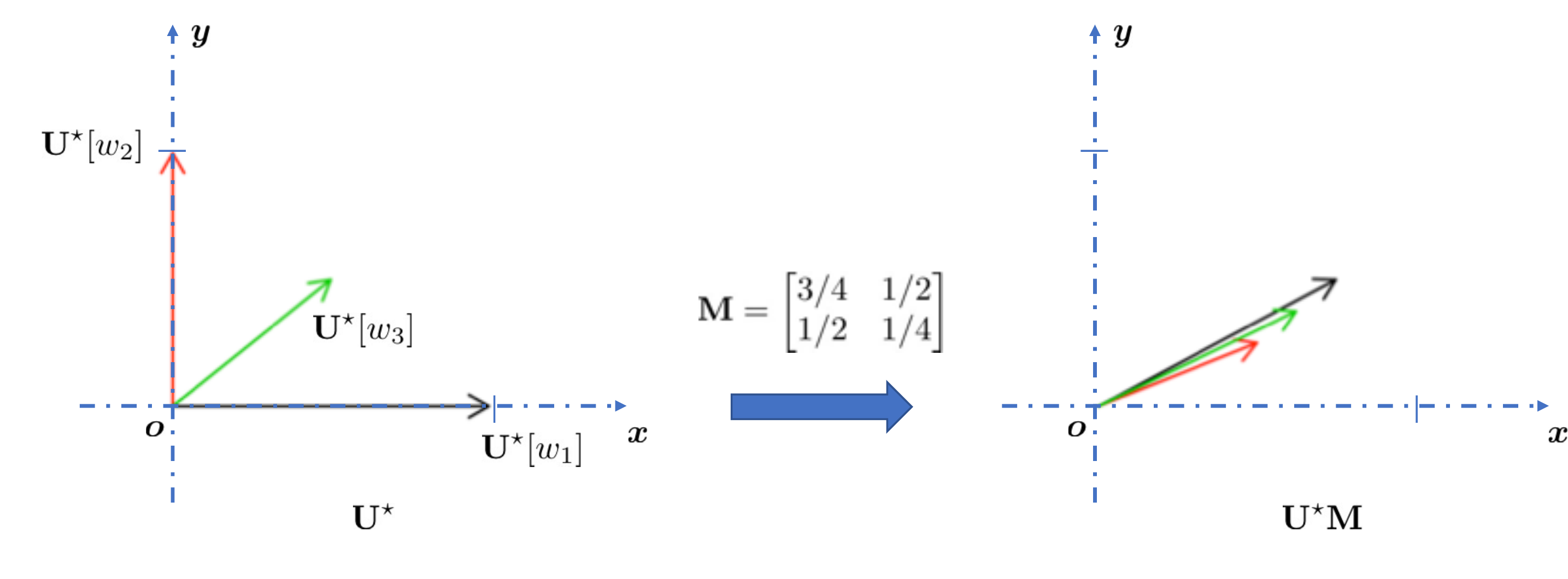}
		\caption{{\bf Illustration of Example \ref{ex: bad_case}.} \normalfont{Here we choose $\mb M$ as specified in \eqref{eqn: ex1_M} with $\varepsilon=1/4$. Although both embedding matrices $\mb U^\star$ and $\mb U^\star \mb M$ are solutions to the SGNS model, we can clearly observe that their word vectors are quite different in terms of encoded linguistic properties. 
		}} \label{fig: ex1}
	\end{center}
\end{figure*}

In this section, we will address a fundamental ambiguity issue undermining the SGNS model \eqref{eqn: SGNS-matrix}. Specifically, we will show that the solution from SGNS can be easily distorted without affecting the objective value.\footnote{In addition to the SGNS model, following the same logic, the fundamental ambiguity issue is shared by many other prevailing word embedding models (e.g., the CBOW model with negative sampling \cite{mikolov2013efficient}\cite{mikolov2013distributed} \cite{mikolov2017advances}, and the GloVe model \cite{pennington2014glove}.}

Suppose $(\mb U^\star, \mb V^\star)$ is one optimal solution to \eqref{eqn: SGNS-matrix}. Then for any invertible matrix $\mb M \in \reals^{d \times d}$, $(\mb U^\star \mb M, \mb V^\star \mb M^{-\top})$ is another optimal solution to SGNS as the objective value remains the same:
{
\begin{flalign}
&\mc L(\mb U^\star \mb M, \; \mb V^\star \mb M^{-\top})  \\
& =  \sum_{(w,c)\in \mc D^+}  \log \sigma\left(\innerprod{\mb M^\top \mb U^\star [w]}{\mb M^{-1} \mb V^\star [c]}\right) \nonumber \\
& \qquad \qquad \qquad \quad \quad \quad + \;\; \sum_{(w,c)\in \mc D^-} \log \sigma\left(-\innerprod{\mb M^\top \mb U^\star [w]}{\mb M^{-1} \mb V^\star [c]}\right) \nonumber \\
& =  \sum_{(w,c)\in \mc D^+}  \log \sigma(\innerprod{\mb U^\star [w]}{\mb V^\star [c]}) + \sum_{(w,c)\in \mc D^-} \log \sigma(-\innerprod{\mb U^\star [w]}{\mb V^\star [c]}) \nonumber \\
&  = \mc L(\mb U^\star, \mb V^\star).  \nonumber
\end{flalign}}
Therefore, there is an extremely large amount of freedom to manipulate $(\mb U^\star, \mb V^\star)$ without affecting the optimality, which could lead to entirely different embeddings in terms of encoded semantic and syntactic properties (i.e., vector lengths and angles).  To better understand the severity of this ambiguity, let us think about the following toy example. 

\begin{example}\label{ex: bad_case}
	Suppose we have $\mc W = \set{w_1, w_2, w_3}$, and
	\[
	\mb U^\star = 
	\begin{bmatrix}
	\mb U^\star[w_1]  \\
	\mb U^\star[w_2]  \\
	\mb U^\star[w_3]
	\end{bmatrix}
	=
	\begin{bmatrix}
	1 & 0  \\
	0 & 1  \\
	1/2& 1/2
	\end{bmatrix},
	\]
	whose row vectors are pretty spread out in $\reals^2$. However, by choosing
	\begin{flalign}\label{eqn: ex1_M}
	\mb M = 
	\begin{bmatrix}
	1/2+  \varepsilon & 1/2  \\
	1/2  & 1/2 -  \varepsilon
	\end{bmatrix},
	\end{flalign}
	where $ 0 \neq \varepsilon \in \reals,$ as argued above, $\mb U^\star \mb M$  is also an optimal solution to \eqref{eqn: SGNS-matrix} with
	\[
	\mb U^\star \mb M = 
	\begin{bmatrix}
	1 & 0  \\
	0 & 1  \\
	1/2& 1/2
	\end{bmatrix}
	\begin{bmatrix}
	1/2+  \varepsilon & 1/2  \\
	1/2  & 1/2 -  \varepsilon
	\end{bmatrix} 
	=
	\begin{bmatrix}
	1/2 +  \varepsilon &\quad 1/2 \\
	1/2   &\quad 1/2  -  \varepsilon \\
	1/2 + \varepsilon/2   & \quad 1/2  - \varepsilon/2
	\end{bmatrix},
	\]
	whose row vectors now become almost parallel as $\varepsilon$ approaches 0. 
\end{example}

To sum up, even though $\mb U^\star$ and $\mb U^\star \mb M$ have entirely different word representations in essence, the SGNS model makes no differentiation among them. In order to ensure intrinsic embeddings being learned, we have to avoid those $\mb M$'s that distort the linguistic properties of the word vectors. As the linguistic properties of the word vectors are mostly reflected by their {\em lengths} and {\em inner products}, we should allow and only allow linear transformations that preserve these quantities. For arbitrary $\bm u, \bm v \in \reals^d$, we are guaranteed to have $\norm{\mb M \bm u}{} = \norm{\mb M \bm u}{} $ and $\innerprod{\mb M \bm u}{\mb M \bm v} = \innerprod{\bm u}{\bm v}$ if and only if $\bm M \in \reals^{d \times d}$ is orthogonal, i.e., $\mb M^\top \mb M = \mb I$. So
the only {\em innocuous} ambiguities are the ones resulting from orthogonal transformation. Geometrically, this means that the rows of the embedding matrix $\mb U$ are transformed through {\em rotation} and {\em reflection}. Therefore, an ideal word embedding model should be expected in general to have unique optimal solutions up to orthogonal transformation, i.e., 

\centerline{
	\textit{$[*] \qquad $ \
		$(\mb U^\star \mb M, \mb V^\star \mb M^{-\top})$ 
		is optimal if and only if $\mb M$ is orthogonal. }
}
We will elaborate how we are able to achieve this in the next section.

\section{SGNS Model with Quadratic Regularization}
In this section, we will work towards the goal stated in $[*]$ by modifying the SGNS model.

Let us consider the extended SGNS model with regularization, 
\begin{flalign}\label{eqn: sgns+r}
\max_{\mb U, \mb V \in \reals^{n \times d}} \quad \mc L(\mb U, \mb V)  - \mc R(\mb U, \mb V),
\end{flalign}
where $\mc R: (\reals^{n \times d}, \reals^{n \times d}) \to \reals \cup \set{+ \infty}$ is some regularizer. The aim is to leverage the regularization term $\mc R$ to enforce the solution to be unique up to orthogonal transformation without (on the other hand) making the model too hard to be optimized. In the following, we will choose $\mc R$ to be a simple quadratic form, and show this slight modification is sufficient to achieve the goal stated in $[*]$ and thus  resolve the ambiguity issues undermining the SGNS model \eqref{eqn: SGNS}.

Consider the following SGNS model with quadratic regularization (named as the SGNS-qr model thereafter)
\begin{flalign}\label{eqn: SGNS-qr}
\max_{\mb U, \mb V \in \reals^{n \times d}} &\quad f(\mb U, \mb V) :=  \sum_{(w,c)\in \mc D^+}  \log \sigma(\mb U[w]^\top \mb V[c]) + \sum_{(w,c)\in \mc D^-} \log \sigma(-\mb U[w]^\top \mb V[c]) \nonumber\\
&\quad \qquad \qquad \qquad - \frac{\lambda}{2} \norm{\mb U}{F}^2  - \frac{\lambda}{2} \norm{\mb V}{F}^2,
\end{flalign}
where $\lambda>0$  is the regularization parameter and $\norm{\cdot}{F}$ denotes the matrix Frobenius norm. A similar model has been proposed in \cite{johnson2014logistic} in the context of collaborative filtering, which falls into the general framework of low-rank models \cite{udell2016generalized} with the logistic loss function and the quadratic regularization. The quadratic regularizer $\mc R(\mb U, \;\mb V) := \frac{\lambda}{2} \norm{\mb U}{F}^2 + \frac{\lambda}{2} \norm{\mb V}{F}^2$ explicitly encourages entries in both $\mb U$ and $\mb V$ to be small in magnitude, which (perhaps surprisingly) has the effect of penalizing the non-orthogonal transformation. We will state this novel insight regarding quadratic regularization in the following theorem.
\begin{theorem}\label{thm: main}
	Let $(\mb U^\star, \mb V^\star)$ be an optimal solution to \eqref{eqn: SGNS-qr}. Suppose $\mb U^\star$ and $\mb V^\star$ are both full rank. Then $(\hat{\mb U}, \hat{\mb V}) := (\mb U^\star \mb M, \mb V^\star \mb M^{-\top})$ is an optimal solution if and only if $\mb M$ is orthogonal.
\end{theorem}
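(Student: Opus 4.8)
The strategy is to peel off the part of $f$ that is invariant under the transformation $(\mb U,\mb V)\mapsto(\mb U\mb M,\mb V\mb M^{-\top})$, thereby reducing the claim to a statement about the quadratic regularizer alone, and then to use first-order optimality of $(\mb U^\star,\mb V^\star)$ to pin that statement down. Using the invariance identity for $\mc L$ already verified above, for every invertible $\mb M\in\reals^{d\times d}$ one has
\[
f(\mb U^\star\mb M,\ \mb V^\star\mb M^{-\top}) = \mc L(\mb U^\star,\mb V^\star) - \frac{\lambda}{2}\left(\norm{\mb U^\star\mb M}{F}^2 + \norm{\mb V^\star\mb M^{-\top}}{F}^2\right),
\]
so, since $(\mb U^\star,\mb V^\star)$ is a maximizer, $(\hat{\mb U},\hat{\mb V})$ is optimal if and only if $\norm{\mb U^\star\mb M}{F}^2 + \norm{\mb V^\star\mb M^{-\top}}{F}^2 = \norm{\mb U^\star}{F}^2 + \norm{\mb V^\star}{F}^2$. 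The ``if'' direction is then immediate: when $\mb M$ is orthogonal, $\mb M^{-\top}=\mb M$ and the Frobenius norm is unitarily invariant, so the two objective values agree and $(\hat{\mb U},\hat{\mb V})$ inherits optimality. Everything below concerns the ``only if'' direction.

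Next I would set $\mb A:=(\mb U^\star)^\top\mb U^\star$ and $\mb B:=(\mb V^\star)^\top\mb V^\star$ --- both positive definite by the full-rank hypothesis --- and substitute $\mb S:=\mb M\mb M^\top$. Since $\mb M\mapsto\mb M\mb M^\top$ sends the invertible matrices onto the whole positive-definite cone, and since $\norm{\mb U^\star\mb M}{F}^2=\operatorname{tr}(\mb A\mb S)$ and $\norm{\mb V^\star\mb M^{-\top}}{F}^2=\operatorname{tr}(\mb B\mb S^{-1})$, the condition above becomes $\phi(\mb S)=\phi(\mb I)$, where $\phi(\mb S):=\operatorname{tr}(\mb A\mb S)+\operatorname{tr}(\mb B\mb S^{-1})$, and $\mb M$ is orthogonal exactly when $\mb S=\mb I$. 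So the task reduces to showing $\phi(\mb S)=\phi(\mb I)\Rightarrow\mb S=\mb I$.

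Here the optimality of $(\mb U^\star,\mb V^\star)$ is the crucial input: if some $\mb S_0\succ 0$ had $\phi(\mb S_0)<\phi(\mb I)$, then $(\mb U^\star\mb S_0^{1/2},\,\mb V^\star\mb S_0^{-1/2})$ would leave $\mc L$ unchanged but strictly shrink the regularizer, contradicting optimality; hence $\mb I$ globally minimizes $\phi$ over the open positive-definite cone, and stationarity at $\mb I$ gives $\mb A-\mb B=0$, i.e.\ $\mb A=\mb B$. (Equivalently, $\mb A=\mb B$ falls out of differentiating $\mc L(\mb U^\star e^{t\mb K},\mb V^\star e^{-t\mb K^\top})\equiv\mc L(\mb U^\star,\mb V^\star)$ at $t=0$ and invoking the stationarity equations $\nabla_{\mb U}\mc L(\mb U^\star,\mb V^\star)=\lambda\mb U^\star$, $\nabla_{\mb V}\mc L(\mb U^\star,\mb V^\star)=\lambda\mb V^\star$.) With $\mb A=\mb B$ in hand, and using that $\mb S^{1/2}$ and $\mb S^{-1/2}$ commute,
\[
\phi(\mb S)-\phi(\mb I)=\operatorname{tr}\!\big(\mb A(\mb S+\mb S^{-1}-2\mb I)\big)=\operatorname{tr}\!\big(\mb A\,(\mb S^{1/2}-\mb S^{-1/2})^2\big)\ge 0,
\]
the inequality because $\mb A\succ 0$ and $(\mb S^{1/2}-\mb S^{-1/2})^2\succeq 0$. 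Thus $\phi(\mb S)=\phi(\mb I)$ forces $\operatorname{tr}(\mb A\mb C)=0$ with $\mb C:=(\mb S^{1/2}-\mb S^{-1/2})^2\succeq 0$, which (as $\mb A\succ 0$) forces $\mb C=0$, hence $\mb S^{1/2}=\mb S^{-1/2}$, hence $\mb S=\mb I$, completing the argument.

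The one step that needs genuine care is the derivation of $\mb A=\mb B$: it rests on first-order stationarity of an actual maximizer of $f$, so one should record that $f$ is bounded above (each $\log\sigma(\cdot)\le 0$) and coercive in $(\mb U,\mb V)$, that $\mb I$ is interior to the domain of $\phi$, and that it is precisely the invariance of $\mc L$ that converts stationarity of $f$ into the balance identity $(\mb U^\star)^\top\mb U^\star=(\mb V^\star)^\top\mb V^\star$. It is also worth flagging why the full-rank assumption cannot be dropped: if $\mb A$ is merely positive semidefinite, $\phi(\mb S)=\phi(\mb I)$ (equivalently $\mb S\mb A\mb S=\mb A$) has a continuum of non-identity solutions --- e.g.\ $\mb A=\mb B=\operatorname{diag}(1,0)$ is fixed by every $\mb S=\operatorname{diag}(1,s)$, $s>0$ --- so $[*]$ genuinely fails in that regime. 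The remaining ingredients --- unitary invariance of the Frobenius norm, the commuting-square identity, and ``$\operatorname{tr}(\mb A\mb C)=0$ with $\mb A\succ 0$, $\mb C\succeq 0$ implies $\mb C=0$'' --- are routine.
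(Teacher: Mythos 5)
Your proof is correct, but it takes a genuinely different route from the paper's. The paper proves the \emph{only if} direction by explicitly characterizing the optimizers: it takes the reduced SVD $\mb U^\star (\mb V^\star)^\top = \mb U \mb\Sigma \mb V^\top$, sandwiches $f(\mb U^\star,\mb V^\star)$ between an upper bound (AM--GM plus a von Neumann--type trace inequality) and the lower bound obtained by evaluating $f$ at the balanced factorization $(\mb U\mb\Sigma^{1/2},\mb V\mb\Sigma^{1/2})$, and then uses a $2\times 2$-block positive-semidefiniteness argument to conclude $\mb U^\star = \mb U\mb\Sigma^{1/2}\mb Q$ and $\hat{\mb U} = \mb U\mb\Sigma^{1/2}\hat{\mb Q}$ for orthogonal $\mb Q,\hat{\mb Q}$, whence $\mb M = \mb Q^\top\hat{\mb Q}$. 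You instead never touch the SVD: you reduce everything to the scalar function $\phi(\mb S)=\operatorname{tr}(\mb A\mb S)+\operatorname{tr}(\mb B\mb S^{-1})$ on the positive-definite cone with $\mb S = \mb M\mb M^\top$, extract the balance identity $\mb A=(\mb U^\star)^\top\mb U^\star=(\mb V^\star)^\top\mb V^\star=\mb B$ from first-order stationarity of an interior global minimum of $\phi$ at $\mb I$ (itself forced by optimality of $(\mb U^\star,\mb V^\star)$ and surjectivity of $\mb M\mapsto\mb M\mb M^\top$ onto the cone), and then close with the exact identity $\phi(\mb S)-\phi(\mb I)=\operatorname{tr}\bigl(\mb A(\mb S^{1/2}-\mb S^{-1/2})^2\bigr)$, which is strictly positive for $\mb S\neq\mb I$ since $\mb A\succ 0$. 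Both arguments are sound; yours is shorter and more self-contained for the equivalence itself (and your $\operatorname{diag}(1,0)$ example cleanly shows why full rank is needed), while the paper's buys the additional structural information that any optimizer has the balanced form $\mb U\mb\Sigma^{1/2}\mb Q$, of which your identity $\mb A=\mb B$ is a consequence. The only caveat worth noting is that your parenthetical alternative derivation of $\mb A=\mb B$ via $\nabla_{\mb U}\mc L(\mb U^\star,\mb V^\star)=\lambda\mb U^\star$ is redundant given the primary variational argument, which already stands on its own.
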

\begin{proof}
	Let us first prove the {\em if} direction. Since $\mb M$ is orthogonal, 
	\begin{flalign}
	\norm{\mb U^\star}{F} = \norm{\mb U^\star \mb M}{F}, \quad \norm{\mb V^\star}{F} = \norm{\mb V^\star  \mb M}{F} =  \norm{\mb V^\star  \mb M^{-\top}}{F},
	\end{flalign}
	and therefore
	\begin{flalign}
	f(\mb U^\star, \mb V^\star) &= \mc L(\mb U^\star, \mb V^\star) + \frac{\lambda}{2} \norm{\mb U^\star}{F}^2  + \frac{\lambda}{2} \norm{\mb V^\star}{F}^2 \nonumber \\
	& = \mc L(\mb U^\star \mb M, \mb V^\star \mb M^{-\top}) + \frac{\lambda}{2} \norm{\mb U^\star  \mb M}{F}^2  + \frac{\lambda}{2} \norm{\mb V^\star \mb M^{-\top}}{F}^2 \nonumber \\&
	= f(\hat{\mb U}, \hat{\mb V}), \nonumber
	\end{flalign}
	which implies the optimality of $(\hat{\mb U}, \hat{\mb V})$.
	
	In the rest of the proof,  we will focus on the {\em only if} direction. 
	
	Let $\mb U \mb \Sigma \mb V^\top$ be the reduced singular value decomposition (SVD) \cite{trefethen1997numerical} of $\mb U^\star (\mb V^\star)^\top$, i.e., $\mb U^\star (\mb V^\star)^\top = \mb U \mb \Sigma \mb V^\top$  where $\mb U \in \reals^{n\times d}$ and $\mb V \in \reals^{n\times d}$ have orthonormal columns, and $\mb \Sigma = \diag{\sigma_1, \sigma_2, \ldots, \sigma_d}$ with $\sigma_1 \ge \sigma_2 \ge \cdots \ge \sigma_d >0.$  Here we write $\sigma_d >0$  since $\mb U^\star$ and $\mb V^\star$ are full rank, and by Sylvester inequality \cite{horn1990matrix}
	\begin{flalign*}
	d = \rank(\mb U^\star) + \rank(\mb V^\star) - d \le \rank(\mb U^\star (\mb V^\star)^\top)  \le  \min\{\rank(\mb U^\star), \rank(\mb V^\star)\}= d.
	\end{flalign*}
	Now we will first derive a upper bound for $f(\mb U^\star, \mb V^\star)$:
	\begin{flalign}\label{eqn: upp_bd}
	f(\mb U^\star, \mb V^\star)  &=  \mc L(\mb U^\star, \mb V^\star) -  \mc R(\mb U^\star, \mb V^\star) \nonumber  \\
	& = \mc L(\mb U^\star, \mb V^\star)  - \frac{\lambda}{2} \norm{\mb U^\star}{F}^2  - \frac{\lambda}{2} \norm{\mb V^\star}{F}^2 \nonumber \\
	& \le \mc L(\mb U^\star, \mb V^\star)  - \lambda \cdot \norm{\mb U^\star}{F}  \cdot \norm{\mb V^\star}{F} \nonumber \\
	& \le \mc L(\mb U^\star, \mb V^\star)  - \lambda \cdot \norm{\mb U^\top \mb U^\star}{F}  \cdot \norm{\mb V^\top \mb V^\star}{F} \nonumber \\
	& \le \mc L(\mb U^\star, \mb V^\star)  - \lambda \cdot \mbox{trace}(\mb U^\top \mb U^\star (\mb V^\star)^\top \mb V) \nonumber \\
	& = \mc L(\mb U^\star, \mb V^\star)  - \lambda \cdot \norm{\bm \sigma}{1},
	\end{flalign}
	where the third and the fifth lines uses Cauchy-Schwartz inequality, the fourth line holds as the operator norms $\norm{\mb U}{} \le 1$, $\norm{\mb V}{} \le 1$, and $\norm{\mb A\mb B}{F} \le \norm{\mb A}{} \norm{\mb B}{F} $ for any compatible matrices $\mb A$ and $\mb B$, and the last line follows directly from the definition of SVD.
	
	But on the other hand, we can also derive the following lower bound for $f(\mb U^\star, \mb V^\star)$:
	\begin{flalign} \label{eqn: low_bd}
	f(\mb U^\star, \mb V^\star) & \ge f(\mb U \mb \Sigma^{\frac{1}{2}}, \mb V \mb \Sigma^{\frac{1}{2}}) \nonumber \\
	& = \mc L(\mb U^\star, \mb V^\star) - \frac{\lambda}{2} \norm{\mb U \mb \Sigma^{\frac{1}{2}}}{F}^2  - \frac{\lambda}{2} \norm{\mb V \mb \Sigma^{\frac{1}{2}}}{F}^2  \nonumber \\
	& = \mc L(\mb U^\star, \mb V^\star) - \frac{\lambda}{2} \norm{\mb \Sigma^{\frac{1}{2}}}{F}^2  - \frac{\lambda}{2} \norm{\mb \Sigma^{\frac{1}{2}}}{F}^2  \nonumber \\
	& = \mc L(\mb U^\star, \mb V^\star)  - \lambda \norm{\bm \sigma}{1},
	\end{flalign}
	where $\mb \Sigma^{\frac{1}{2}}:=\diag{\sqrt \sigma_1, \sqrt \sigma_2, \ldots, \sqrt \sigma_d}$.
	
	Combining \eqref{eqn: upp_bd} and  \eqref{eqn: low_bd} , one can easily derive that 
	\begin{flalign}
	f(\mb U^\star, \mb V^\star) &= \mc L(\mb U^\star, \mb V^\star)  - \lambda \norm{\bm \sigma}{1}, \quad \mbox{and}  \\
	\frac{1}{2} \norm{\mb U^\star}{F}^ 2  +  \frac{1}{2} \norm{\mb V^\star}{F}^ 2 &= \norm{\mb U^\star}{F} \norm{\mb V^\star}{F}=\norm{\mb U^\star}{F}^ 2 = \norm{\mb V^\star}{F}^ 2 = \norm{\bm \sigma}{1}. \label{eqn: f_norm}
	\end{flalign}
	
	Now we are ready to show that $\mb U^\star = \mb U \mb \Sigma^{\frac{1}{2}} \mb Q$ for some orthogonal matrix $\mb Q \in \reals^{d \times d}.$ 
	
	As $\mb U \mb \Sigma \mb V^\top$ is the SVD of $\mb U^\star (\mb V^\star)^\top$, there exist full rank matrices $\mb S \in \reals^{d\times d}$ and $\mb T \in \reals^{d\times d}$ such that $\mb U^\star = \mb U \mb S$, $\mb V^\star = \mb V \mb T$ and $\mb S \mb T^\top = \mb \Sigma = \diag{\bm \sigma}.$  Then from \eqref{eqn: f_norm}, one  has 
	\begin{flalign} 
	\norm{\mb U^\star}{F} = \norm{\mb U \mb S}{F} = \norm{\mb S}{F} = \norm{\bm \sigma}{1}^{1/2}, 
	\label{eqn:S_F}\\
	\norm{\mb V^\star}{F} = \norm{\mb V \mb T}{F} = \norm{\mb T}{F} = \norm{\bm \sigma}{1}^{1/2}.
	\label{eqn:T_F}
	\end{flalign}
	
	Now let us write
	\begin{flalign}
	\mb X := 
	\begin{bmatrix}
	\mb S  \\
	\mb T
	\end{bmatrix}
	\begin{bmatrix}
	\mb S^\top 
	\mb T^\top
	\end{bmatrix}
	=
	\begin{bmatrix}
	\mb S \mb S^\top & \mb S \mb T^\top  \\
	\mb T \mb S^\top & \mb T \mb T^\top  \\
	\end{bmatrix}
	=
	\begin{bmatrix}
	\mb S \mb S^\top & \mb \Sigma  \\
	\mb \Sigma^\top  & \mb T \mb T^\top  \\
	\end{bmatrix} \succeq \mb 0.
	\end{flalign}
	Define
	\begin{flalign}\label{eqn: def_st_star}
	s^\star \in \arg\min_{i \in [d]} \set{(\mb S \mb S^\top)_{ii} - \sigma_i}  \quad \mbox{and} \quad
	t^\star  \in \arg\min_{i \in [d]} \set{(\mb T \mb T^\top)_{ii} - \sigma_i}.
	\end{flalign}
	Due to the facts that
	\begin{flalign}\label{eqn: st_sum}
	\sum_{ii} (\mb S \mb S^\top)_{ii}  = \norm{\mb S}{F}^2 = \sum_{i\in[d]} \sigma_i  \quad \mbox{and} \quad
	\sum_{ii} (\mb T \mb T^\top)_{ii}  = \norm{\mb T}{F}^2 = \sum_{i\in[d]} \sigma_i,
	\end{flalign}
	we must have 
	\begin{flalign}\label{eqn: st_star_neg}
	(\mb S \mb S^\top)_{s^\star s^\star} - \sigma_{s^\star} \le 0  \quad \mbox{and} \quad
	(\mb T \mb T^\top)_{t^\star t^\star} - \sigma_{t^\star} \le 0.
	\end{flalign}
	Since $\mb X$ is positive semidefinite \cite{horn1990matrix}, 
	\begin{flalign}
	(\bm e_{s^\star} - \bm e_{t^\star})^\top  \mb X (\bm e_{s^\star} - \bm e_{t^\star}) 
	= (\mb S \mb S^\top)_{s^\star s^\star} + (\mb T \mb T^\top)_{t^\star t^\star} - \sigma_{s^\star} - \sigma_{t^\star} \ge 0.
	\end{flalign}
	which together with \eqref{eqn: st_star_neg} leads to 
	\begin{flalign} \label{eqn: st_lower}
	(\mb S \mb S^\top)_{s^\star s^\star} = \sigma_{s^\star} \quad\mbox{and} \quad
	(\mb T \mb T^\top)_{t^\star t^\star} = \sigma_{t^\star}.
	\end{flalign}
	Combining \eqref{eqn: def_st_star} and \eqref{eqn: st_star_neg}, it can be easily verified that
	\begin{flalign}
	\diag{\mb S \mb S^\top}  = \bm \sigma = \diag{\mb T \mb T^\top}, 
	\end{flalign}
	which implies that for any $i \in [d]$, $\bm s_i$ and $\bm t_i$ (the $i$-th row of $\mb S$ and $\mb T$) satisfies $\norm{\bm s_i}{}^2 = \norm{\bm t_i}{}^2 = \sigma_i$. In addition, since $\mb S \mb T^\top = \mb \Sigma$, the inner-product $\innerprod{\bm s_i}{\bm t_i} = \sigma_i.$ Due to Cauchy-Schwartz inequality, $\bm s_i = \bm t_i$.  Therefore, $\mb S = \mb T$, $\mb \Sigma = \mb S \mb T^\top =  \mb S \mb S^\top =  \mb T \mb T^\top$. Then it can be easily verified that $\mb S = \mb T= \mb \Sigma^{\frac{1}{2}} \mb Q$ for some orthogonal matrix $\mb Q$. Therefore, we have proved that $\mb U^\star = \mb U \mb \Sigma^{\frac{1}{2}} \mb Q$ for some orthogonal matrix $\mb Q \in \reals^{d \times d}$. 
	
	Next, as $(\hat{\mb U}, \hat{\mb V})$ is also optimal and $\hat{\mb U}\hat{\mb V}^\top = \mb U^\star (\mb V^\star)^\top$, we can follow exactly the same argument to show that $\hat{\mb U} = \mb U \mb \Sigma^{\frac{1}{2}} \hat{\mb Q}$ for anther orthogonal matrix $\hat{\mb Q} \in \reals^{d \times d}.$ Therefore, in order to satisfy
	\begin{flalign}
	\hat{\mb U} = \mb U \mb \Sigma^{\frac{1}{2}} \hat{\mb Q} =  \underbrace{\mb U \mb \Sigma^{\frac{1}{2}} {\mb Q}}_{\mb U^\star}\mb M,
	\end{flalign}
	one must have $\mb M = \mb Q^\top \hat{\mb Q}$, which is also orthogonal. That completes our proof.
\end{proof}

Theorem \ref{thm: main} states that optimal solutions to \eqref{eqn: SGNS-qr} are not unique, but are essentially all equivalent in terms of their encoded  linguistic properties, as a result of the quadratic regularization removing all the adversarial ambiguities (e.g. the one described in Example \ref{ex: bad_case}) undermining the original SGNS model \eqref{eqn: SGNS-matrix}.  

\section{Experiment}

In this section, we will conduct some preliminary experiments to compare the SGNS model with our SGNS-qr model.

\paragraph{Algorithm.} We use the popular toolbox \textsf{word2vec} \cite{mikolov2013efficient}\cite{mikolov2013distributed} with its default parameter setting to solve the SGNS model, which leverages the standard {\em stochastic gradient method} (SGM) \cite{robbins1951stochastic}\cite{bertsekas2011incremental} to optimize the objective. 
We solve the SGNS-qr model by modifying the SGM in \textsf{word2vec} to accommodate the additional quadratic terms. 

\paragraph{Dataset.} We use a publicly accessible dataset Enwik9\footnote{http://mattmahoney.net/dc/textdata.html} as our text corpus, which contains about 128 million tokens collected from English Wikipedia articles. The vocabulary $\mc W$ is constructed by filtering out words that appear less than $200$ times. The positive and negative word-context pairs are generated in exactly the same manner with the one implemented in \textsf{word2vec} using its default setting. We adopt Google's analogy dataset to evaluate word  embeddings on analytical reasoning task. 

\paragraph{Evaluation.} In Google's analogy dataset,  $19,544$ questions are presented with the form ``$a$ is to $a^\star$ as $b$ is to $b^\star$'', where $b^\star$ is hidden and to be inferred from the whole vocabulary $\mc W$ based on the input $(a, a^\star, b).$ Among all these analogy questions, around half of them are syntactic ones (e.g., ``think is to thinking as code is to coding''), and the other half are semantic ones (e.g., ``man is to women as king is to queen'').   The questions are answered using the  \textsc{3CosMul} scheme \cite{levy2014linguistic}:
\begin{flalign}\label{eqn: 3CosMul}
\mc B^{\star} = \arg \max_{x \in \mc W/\set{a,a^\star, b}} \frac{\cos(\mc U[x], \mc U[a^\star])  \cdot  \cos(\mc U[x], \mc U[b]) }{\cos(\mc U[x], \mc U[a]) + \varepsilon}
\end{flalign}
where $\mc U: \mc W \to \reals^d$ is the word embedding to evaluate and $\varepsilon =$ 1e-3 is set to avoid zero-division. The performance is measured as the percentage of questions answered correctly, i.e., $ b^\star \in \mc B^
\star.$

\paragraph{Experiment result.}   We evaluate the SNGS model \eqref{eqn: SGNS-matrix} and the SGNS-qr model \eqref{eqn: SGNS-qr} with different choices of $\lambda$. The performance of each model is reported in Table \ref{Tab: experiment_result} in terms of the analytical reasoning accuracy. As presented in Table \ref{Tab: experiment_result}, within a wide and stable range of choices in $\lambda$, the SGNS-qr model outperforms the SGNS model ($\lambda=0$) in a consistent manner, and the improvement becomes more and more non-trivial  with the growth in the embedding dimension $d$.  To better visualize this, we plot in Figure \ref{fig: comparison} the prediction accuracies of the SGNS model and the SGNS-qr ($\lambda = 250$) model over $d$. As we can see clearly, the improvement rate rises from (nearly) $0\%$ to more than $3\%$ quickly as $d$ increases. This suggests that the ambiguity issue undermining the SGNS model becomes substantially more severe when the optimization problem \eqref{eqn: SGNS} is solved over larger ambient space. Remarkably, our simple rectification through quadratic regularization is capable of boosting the prediction accuracy by around $3\%$.\footnote{We note that similar empirical observation of the use of quadratic regularization being capable of improving the performance of the SGNS model has also been made by Vilnis and McCallum (2014) for a different NLP task:  word similarity task.}

\begin{center}
	\begin{table*}[h]
		{ 
			\hfill{}
			\begin{tabular}{|C{.75cm}|C{1.2cm}|C{1.2cm}|C{1.2cm}| C{1.2cm} | C{1.2cm} | C{1.2cm} | C{1.2cm} |}
				\hline
				\multirow{2}{0.75cm}{\centering $d$}& \multicolumn{7}{p{8.4 cm}|}{\centering $\lambda$} \\
				\cline{2-8} & \multicolumn{1}{c|}{0} & \multicolumn{1}{c|}{10} & \multicolumn{1}{c|}{50} & \multicolumn{1}{c|}{100} & \multicolumn{1}{c|}{250}  & \multicolumn{1}{c|}{500} & \multicolumn{1}{c|}{1000}\\ 
				\hline 
				\hline
				100  & 0.5642 & 0.5652 & 0.5666  & \textbf{0.5665} & 0.5645  & 0.5570 & 0.5397 \\
				200 & 0.6618 & 0.6617 & 0.6640 & 0.6656 & \textbf{0.6668} & 0.6605 & 0.6355  \\
				300 & 0.6768 & 0.6772  & 0.6798 & 0.6848 & \textbf{0.6909}  & 0.6869  & 0.6593 \\
				400 & 0.6851 & 0.6860 &  0.6902 & 0.6938 &  \textbf{0.7005} & 0.6952  & 0.6658 \\
				500 & 0.6909 & 0.6920 & 0.6947 & 0.6971 & \textbf{0.7035} & 0.6965  & 0.6554  \\
				600 & 0.6755 & 0.6763 & 0.6825 & 0.6888 & \textbf{0.6973} & 0.6926  & 0.6508  \\
				700 & 0.6781 & 0.6798 & 0.6835 & 0.6885 & \textbf{0.6981} & 0.6901  & 0.6399  \\
				800 & 0.6736 & 0.6744 & 0.6808 & 0.6848 & \textbf{0.6926} & 0.6860  & 0.6328  \\
				900 & 0.6713 & 0.6731 & 0.6785 & 0.6818 & \textbf{0.6903} & 0.6829  & 0.6275  \\
				1000 & 0.6622 & 0.6631 & 0.6689 & 0.6738 & \textbf{0.6820} & 0.6716  & 0.6181  \\
				\hline
		\end{tabular}}
		\hfill{}
		\vspace{2mm}
		\caption{\textbf{Evaluation of SGNS ($\lambda = 0$) and SGNS-qr models on Google's analytical reasoning task.} 
		}
		\label{Tab: experiment_result}
	\end{table*}
\end{center}

\begin{figure*}[h]
	\begin{center}
		\includegraphics[width=0.85 \textwidth]{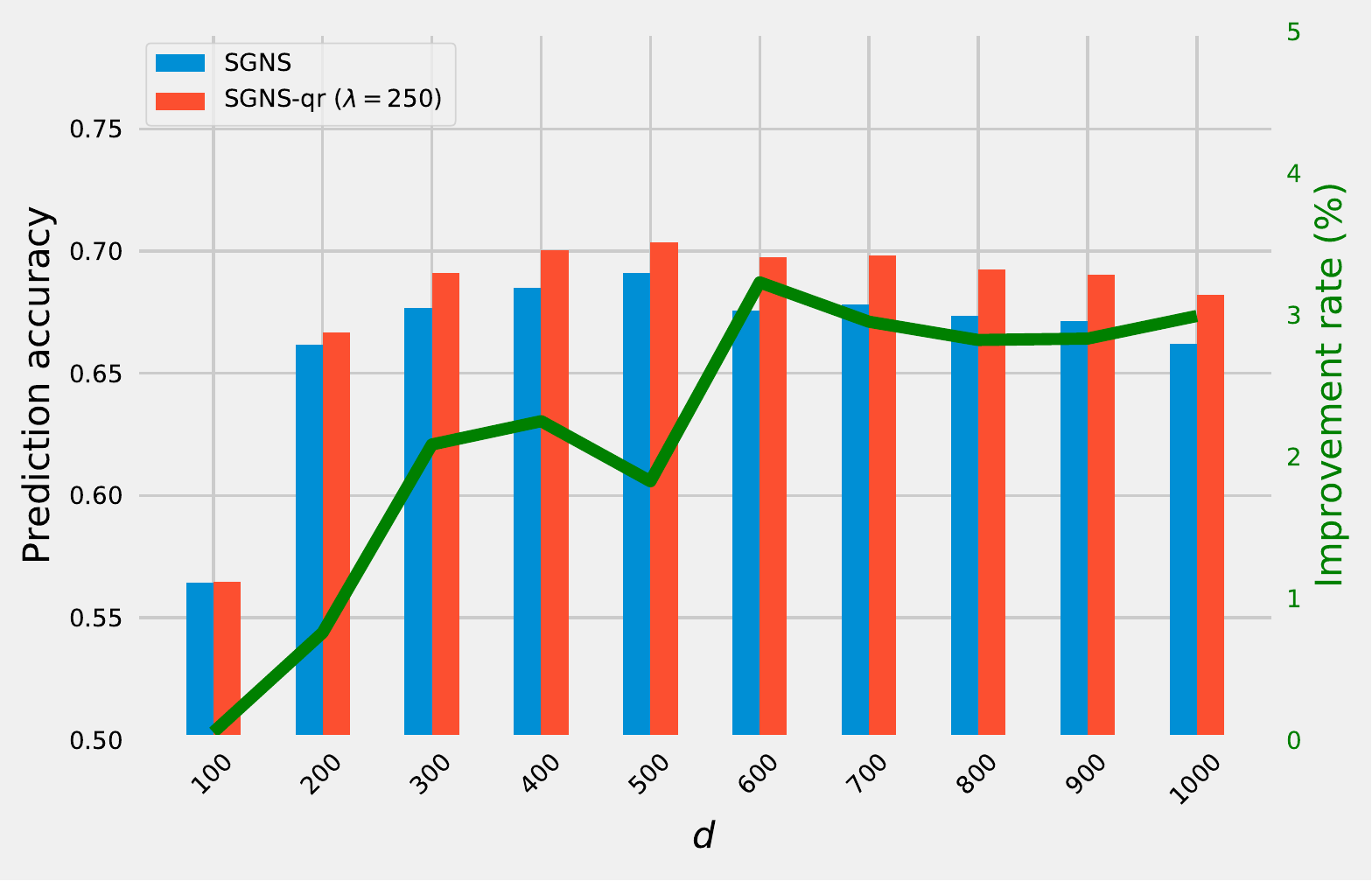}
		\caption{{\bf Comparison between SGNS and SGNS-qr model on Google's analytical reasoning task.} \normalfont{When the embedding dimension is small (e.g., $d = 100$), the SGNS-qr model is almost on a par with the SGNS model. But when $d$ becomes larger, the SGNS-qr model soon surpasses the SGNS model. Remarkably, the improvement is increasingly enlarged with the growth in $d$ and culminates in a boost of around $3\%$ in prediction accuracy.
		}} \label{fig: comparison}
	\end{center}
\end{figure*}

\section{Future Work}
In this paper, we rectify the SGNS model with quadratic regularization, and prove that this simple modification cures ambiguity issues undermining the SGNS model. Formulating the appropriate optimization to solve is an important but first step towards learning word vectors in a robust and efficient manner. We believe a (possibly) larger gain from this rectification comes from the perspective of optimization algorithm. Numerical methods, which perform poorly on machine learning tasks related with the SGNS model, might be solely due to the ill-posedness of the model rather than the inefficacies of the algorithms. In the future, we will tailor some recently designed numerical optimization methods (e.g., \cite{reddi2016stochastic}\cite{wang2017stochastic}\cite{goldfarb2017using}\cite{fonarev2017riemannian}\cite{reddi2018convergence}\cite{chen2018stochastic}) beyond SGM to solve our SGNS-qr model. Another interesting research direction is to resolve the ambiguity issue by leveraging higher-order relations among words and estimating underlying word embeddings through tensor decompositions \cite{kolda2009tensor}\cite{anandkumar2014tensor}\cite{mu2015successive}\cite{mu2017greedy}\cite{bailey2017word}\cite{frandsen2018understanding}.

\bibliographystyle{unsrt}
\bibliography{word2vec}

\end{document}